\documentclass[twoside,3p]{elsarticle}

\usepackage[T1]{fontenc}
\usepackage[utf8]{inputenc}
\usepackage{geometry}
\usepackage{amssymb}
\usepackage{amsmath}
\usepackage{amsthm}
\usepackage{bm}

\usepackage{graphicx}
\usepackage{booktabs}
\usepackage{multirow}
\usepackage{makecell}
\usepackage{multicol}
\usepackage{adjustbox}
\usepackage{algorithm}
\usepackage{algpseudocode}
\usepackage{xcolor}

\usepackage{tikz}
\usetikzlibrary{arrows.meta,positioning,calc,shapes.geometric,patterns,decorations.pathmorphing,backgrounds,fit,3d}
\usepackage{pgfplots}
\pgfplotsset{compat=1.16}

\usepackage{hyperref}
\hypersetup{colorlinks=true, allcolors=blue}
\usepackage[nameinlink]{cleveref}
\crefname{figure}{Fig.}{Figs.}
\crefformat{equation}{Eq.~#2(#1)#3}
\crefformat{section}{Section~#2#1#3}
\AtBeginDocument{\let\citet\cite}

\usepackage[labelfont=bf]{caption}
\usepackage{subcaption}

\newtheorem{prop}{Proposition}

\tikzset{
  neuron/.style={circle,draw=black,minimum size=6mm,inner sep=0pt,fill=white},
  block/.style={rounded corners,draw=black,thick,align=center,inner sep=5pt},
  edgearrow/.style={-{Latex[length=2mm]},thick},
}

\begin{document}

\begin{frontmatter}

\title{PG-KINN: A Physics-Informed Petrov--Galerkin Kolmogorov--Arnold Network for Solving Forward and Inverse PDEs}

\author[inst1,inst2]{Amirhossein Sadr}
\author[inst1]{Nima Soltani}
\author[inst3]{Vahideh Moghtadaiee}
\author[inst2]{Aida Pakniyat}
\author[inst1]{Dara Rahmati\corref{cor1}}
\ead{d_rahmati@sbu.ac.ir}
\author[inst4]{Saeid Gorgin}

\cortext[cor1]{Corresponding author}

\address[inst1]{Department of Computer Science and Engineering, Shahid Beheshti University, Tehran, Iran}
\address[inst2]{School of Computer Science, Institute for Research in Fundamental Sciences (IPM), Tehran, Iran}
\address[inst3]{Cyberspace Research Institute, Shahid Beheshti University, Tehran, Iran}
\address[inst4]{School of Physics, Engineering and Computer Science, University of Hertfordshire, Hatfield, UK}

\begin{abstract}
Physics-informed learning of partial differential equations (PDEs) has been dominated by multilayer perceptrons (MLPs), whose spectral bias and dense parameterization limit both accuracy and interpretability. Kolmogorov--Arnold Networks (KANs) mitigate these limitations because their learnable spline activations are structurally aligned with the piecewise-polynomial bases of classical discretizations. However, the way a PDE is cast into a loss functional is as decisive as the choice of approximator: strong-form residual minimization requires high-order derivatives and heavily weighted losses, the energy (Bubnov--Galerkin) form is restricted to self-adjoint operators and, as we show, collapses to a trivial solution for parameter-identification problems, and boundary-integral forms require a known fundamental solution. We propose \emph{PG-KINN}, a physics-informed KAN built on a \emph{Petrov--Galerkin} formulation in which the trial space is a KAN and the test space is an \emph{independent}, compactly supported, piecewise-polynomial space evaluated with Gauss--Legendre quadrature. Integration by parts lowers the differentiation order while retaining applicability to general non-self-adjoint, nonlinear, and inverse problems; the localized test functions turn the global residual into a set of element-wise weak residuals with favorable conditioning. We prove that the resulting residual-based inverse loss is immune to the zero-modulus collapse that afflicts energy-based inverse formulations. On a suite of benchmarks spanning crack singularities, stress concentration, Neo-Hookean hyperelasticity, inverse parameter identification in heterogeneous media, and complex geometries---PG-KINN consistently outperforms legacy MLP baselines and state-of-the-art KAN-based strong/energy/inverse formulations (PIKAN). The one exception is extreme complex geometries, where the axis-aligned spline grid remains a bottleneck. These results position the Petrov--Galerkin coupling of KAN trial spaces and polynomial test spaces as a robust and accurate route for AI-based computational mechanics.
\end{abstract}

\begin{keyword}
Physics-informed neural networks \sep Kolmogorov--Arnold Networks \sep Petrov--Galerkin method \sep Weak-form PDE solvers \sep Inverse problems \sep Computational mechanics
\end{keyword}

\end{frontmatter}

\section*{Nomenclature}
\begin{multicols}{2}
	\begin{description}
		\item[AD] Automatic differentiation
		\item[BINN] Boundary-integral neural network (inverse form)
		\item[DEM] Deep energy method (Bubnov--Galerkin/energy form)
		\item[FEM] Finite element method
		\item[PG-KINN] KAN trial space in a Petrov--Galerkin weak form (this work)
		\item[GL] Gauss--Legendre quadrature
		\item[KAN] Kolmogorov--Arnold Network
		\item[MLP] Multilayer perceptron
		\item[NTK] Neural tangent kernel
		\item[PDE] Partial differential equation
		\item[PG] Petrov--Galerkin
		\item[PIKAN] Physics-informed KAN (strong/energy/inverse form with KAN backbone)
		\item[PINN] Physics-informed neural network (strong form)
		\item[VPINN] Variational PINN (weak form with MLP)
	\end{description}
\end{multicols}

\section{Introduction}\label{sec:intro}
Partial differential equations (PDEs) are the language of continuum physics, and their reliable solution underpins engineering analysis and design. When geometries, material laws, or boundary conditions become intricate, closed-form solutions are unavailable and numerical methods such as the finite element method (FEM) \cite{finite_element_book,hughes2012finite,bathe2006finite,reddy2019introduction}, mesh-free methods \cite{liu2003mesh,rabczuk2004cracking,nguyen2008meshless}, finite-difference \cite{leveque2007finitedifferentialmethod}, finite-volume \cite{darwish2016finitevolumemethod}, and boundary-element methods \cite{brebbia2012boundary} become indispensable. Over the last decade, deep learning has emerged as a complementary paradigm for PDEs, broadly organized into physics-informed neural networks (PINNs) \cite{PINN_original_paper}, operator learning \cite{DeepOnet,li2020fourier}, and physics-informed operators \cite{li2024physics}. The present work concerns the first paradigm, in which a single network is trained to satisfy the governing equations of one boundary-value problem in a mesh-free manner.

A recurring theme in this literature is that the same PDE admits multiple, mathematically equivalent but computationally distinct, variational statements. The strong form underlies collocation-based PINNs \cite{PINN_original_paper}; the weak form underlies variational PINNs (hp-VPINNs) \cite{hp-VPINN,kharazmi2019variational,khodayi2020varnet,berrone2022variational}; the energy form underlies the deep energy method (DEM) \cite{loss_is_minimum_potential_energy,deep_ritz,wang2022cenn}; and the inverse (boundary-integral) form underlies boundary-integral neural networks (BINNs) \cite{sun2023binn}. Each statement trades derivative order, boundary-condition treatment, and the class of admissible operators against one another, so that the choice of formulation is not a matter of taste but of numerical performance \cite{the_comparision_of_strong_and_energy_form}.

The second ingredient of any physics-informed solver is the function approximator. The universal approximation theorem justifies the multilayer perceptron (MLP) \cite{hornik1989multilayer,super_approximation}, but MLPs suffer from a well-documented spectral bias: through the lens of the neural tangent kernel (NTK), high-frequency components are learned far more slowly than low-frequency ones \cite{jacot2018neural,NTK_PINN,rahaman2019spectral}. The recently proposed Kolmogorov--Arnold Network (KAN) \cite{liu2024kan}, grounded in the Kolmogorov--Arnold representation theorem \cite{kolmogorov1961representation,braun2009constructive}, replaces fixed nodal nonlinearities with learnable univariate functions, typically B-splines. This yields fewer parameters, greater interpretability, and---crucially for PDEs---a trial space whose local, piecewise-polynomial structure mirrors the shape functions of FEM and the NURBS bases of isogeometric analysis \cite{hughes2005isogeometric}. Variants replace B-splines with Chebyshev polynomials \cite{ss2024chebyshev,shukla2024comprehensive}, radial basis functions \cite{li2024kolmogorov}, or wavelets \cite{bozorgasl2024wav}, and early studies confirm KAN's promise for differential equations \cite{rojas2024adaptive,wang2024expressiveness}.

Coupling KANs with the different PDE formulations has recently been shown to improve accuracy and convergence over MLP counterparts on strong-, energy-, and inverse-form problems \cite{liu2024kan}. Yet each of these three formulations carries an intrinsic limitation that KAN alone does not remove: the strong form still requires the highest derivative order and a delicate balancing of residual and boundary penalties; the energy form still requires the operator to be self-adjoint and, as we prove in \Cref{subsec:collapse}, becomes ill-posed for material-identification inverse problems; and the boundary-integral form still requires a closed-form fundamental solution, which is unavailable for most nonlinear or heterogeneous operators. A formulation that simultaneously (i) reduces the derivative order, (ii) remains valid for general non-self-adjoint and nonlinear operators, (iii) avoids fundamental solutions, and (iv) is robust for inverse problems, would combine the strengths of all three while inheriting the approximation advantages of KAN.

We argue that the classical \emph{Petrov--Galerkin} method provides exactly this formulation. In a Petrov--Galerkin statement the trial and test spaces are chosen \emph{independently}: we let the trial space be a KAN and the test space be a fixed, compactly supported, piecewise-polynomial space defined on a background partition and integrated with Gauss--Legendre quadrature. Integration by parts transfers derivatives from the KAN trial function onto the smooth polynomial test functions, lowering the differentiation order (as in the weak/energy forms) while preserving generality (as in the strong form). Because the test functions are local, the single global residual is replaced by a vector of element-wise weak residuals, which improves conditioning and provides a natural, penalty-light treatment of interfaces and Neumann data.

We term the resulting framework \textbf{PG-KINN}. Its contributions are:
\begin{enumerate}
	\item A Petrov--Galerkin physics-informed formulation with a KAN trial space and an independent piecewise-polynomial test space, unifying the derivative-order reduction of weak forms with the operator generality of strong forms.
	\item An inverse formulation whose residual-based loss we prove to be free of the zero-modulus collapse intrinsic to energy-based inverse problems.
	\item A systematic evaluation on singularity, stress-concentration, nonlinear hyperelastic, heterogeneous inverse, and complex-geometry benchmarks, showing consistent accuracy gains over both legacy MLP baselines and state-of-the-art KAN-based strong/energy/inverse formulations, and an honest account of the extreme-complex-geometry limitation.
\end{enumerate}

The remainder of the paper is organized as follows. \Cref{sec:motivation} motivates the Petrov--Galerkin choice. \Cref{sec:prelim} reviews KANs, the strong form, the Petrov--Galerkin formulation, and its inverse counterpart. \Cref{sec:framework} introduces PG-KINN, including the weak-form derivation, the collapse-freeness proof, the KAN--FEM correspondence, and quadrature/test-function details. \Cref{sec:experiments} reports numerical experiments, and \Cref{sec:conclusion} concludes.

\section{Motivation}\label{sec:motivation}
The design of a physics-informed solver rests on two coupled decisions---how to \emph{represent} the solution and how to \emph{measure} its physical consistency. We motivate PG-KINN by examining the failure modes that arise when either decision is made in isolation, and by showing that a Petrov--Galerkin coupling of a KAN trial space with an independent polynomial test space resolves them jointly.

\paragraph{Representation: why splines, not MLPs}
An MLP trained by gradient descent converges at a rate governed by the eigenspectrum of its NTK, whose eigenvalues decay rapidly so that high-frequency error modes persist \cite{jacot2018neural,NTK_PINN}. Because differentiation amplifies high frequencies, this spectral bias is especially damaging when the loss involves derivatives of the network, as in every physics-informed formulation. KANs replace global nodal nonlinearities with local B-spline activations; the compact support of the splines flattens the NTK spectrum and equips the trial space with the same piecewise-polynomial locality that makes FEM and isogeometric analysis effective for fields with sharp gradients, interfaces, and singularities. This makes a spline-based KAN the natural trial space for mechanics-oriented PDEs.

\paragraph{Measurement: the limits of a single formulation}
Given a good trial space, three established formulations still each fall short.
\emph{(i) Strong form.} Collocation minimizes the pointwise residual of the highest-order operator, which magnifies approximation error near singularities and forces a fragile, hand-tuned balance between interior and boundary penalties \cite{ill_gradient,NTK_to_get_hyperparameter_of_PINN}.
\emph{(ii) Energy form.} Minimizing a potential energy lowers the derivative order and eliminates loss weights, but it presupposes a self-adjoint operator that admits an energy functional \cite{loss_is_minimum_potential_energy}; for parameter identification it is worse than fragile---we show in \Cref{subsec:collapse} that minimizing strain energy over an unknown modulus drives the modulus to zero, a spurious global minimizer.
\emph{(iii) Inverse/boundary-integral form.} Recasting the problem on the boundary is highly accurate but requires the fundamental solution of the operator \cite{sun2023binn}, which is unavailable for nonlinear or strongly heterogeneous media.

\paragraph{The Petrov--Galerkin resolution}
The weighted-residual (weak) statement multiplies the residual by a test function and integrates by parts, transferring one or more derivatives onto the test function. Choosing the test space \emph{independently} of the trial space---the defining feature of a Petrov--Galerkin method, in contrast to the Bubnov--Galerkin/energy choice of test $=$ variation of trial---yields four decisive advantages for a KAN-based solver:
\begin{itemize}
	\item \textbf{Lower derivative order without loss of generality.} Integration by parts reduces the order demanded of the KAN (as in the energy form) while keeping the formulation valid for non-self-adjoint and nonlinear operators (as in the strong form), and without any fundamental solution (unlike BINN).
	\item \textbf{Localized, well-conditioned residuals.} Compactly supported polynomial test functions produce one weak residual per background cell. The resulting vector-valued loss is better conditioned than a single global residual and localizes error control where the physics is difficult, echoing the element-wise assembly of FEM.
	\item \textbf{Penalty-light boundary and interface handling.} Neumann data enter naturally as boundary integrals, and material interfaces are captured by the intrinsic $C^0$ structure of the spline trial space, reducing the number of hand-tuned weights relative to strong-form losses.
	\item \textbf{Robust inverse problems.} When the unknown is a material field, the weak residual is affine in that field, giving a genuine least-squares problem whose minimizer is the true field; the trivial (zero) solution is \emph{not} a minimizer.
\end{itemize}
The synergy is specific: KAN supplies a spline trial space aligned with the geometry of the solution, and the Petrov--Galerkin weak form supplies a measurement of physical consistency that is low-order, general, and robust. \Cref{fig:motivation} summarizes this design space. PG-KINN is the point at which both decisions are made together.

\tikzset{
  neuron/.style={circle,draw=black,minimum size=6mm,inner sep=0pt,fill=white},
  block/.style={rounded corners,draw=black,thick,align=center,inner sep=5pt,fill=gray!15},
  edgearrow/.style={-{Latex[length=2mm]},thick,black},
}

\begin{figure}[t]
\centering
\begin{tikzpicture}[font=\small]
  \draw[edgearrow] (0,0) -- (10.5,0) node[below right]{Generality of operators (measurement)};
  \draw[edgearrow] (0,0) -- (0,6.2) node[above left,rotate=90,anchor=south]{Robustness \& low derivative order};
  \node[block,fill=red!8] (strong) at (8.4,1.2) {Strong form (PINN)\\ high order, fragile weights};
  \node[block,fill=orange!12] (energy) at (2.3,4.2) {Energy form (DEM)\\ self-adjoint only,\\ inverse collapse};
  \node[block,fill=blue!8] (binn) at (2.3,1.4) {Inverse form (BINN)\\ needs fundamental sol.};
  \node[block,fill=green!14,thick] (gk) at (8.4,4.5) {\textbf{PG-KINN}\\ Petrov--Galerkin $+$ KAN\\ low order, general, robust};
  \draw[edgearrow,green!50!black] (strong.north) .. controls (9.2,3) .. (gk.south);
  \draw[edgearrow,green!50!black] (energy.east) .. controls (5.5,4.4) .. (gk.west);
  \draw[edgearrow,green!50!black] (binn.north east) .. controls (5.5,2.6) .. (gk.south west);
\end{tikzpicture}
\caption{The design space of physics-informed formulations. Strong, energy, and inverse forms each occupy a corner with a structural limitation. PG-KINN combines a KAN trial space with a Petrov--Galerkin test space to reach the favorable region: low derivative order, generality across operators, and robustness for inverse problems.}
\label{fig:motivation}
\end{figure}

\section{Preliminaries}\label{sec:prelim}
This section fixes notation and reviews the ingredients of PG-KINN: the KAN trial space, the strong form of a boundary-value problem, the Petrov--Galerkin weak form, and its inverse counterpart.

\subsection{Kolmogorov--Arnold Networks}\label{subsec:kan}
The Kolmogorov--Arnold representation theorem states that any continuous multivariate function can be written as a finite superposition of continuous univariate functions \cite{kolmogorov1961representation}. KAN operationalizes this by making the univariate functions learnable \cite{liu2024kan}. For a layer with $l_i$ inputs and $l_o$ outputs, each scalar map $\phi_{ij}$ is expanded in a B-spline basis of order $r$ on a grid of size $G$,
\begin{equation}
	\phi_{ij}(x_j)=\sum_{m=1}^{G+r} c^{(i,j)}_{m}\,B_{m}(x_j),
	\label{eq:spline_activation}
\end{equation}
where $\{B_m\}$ are the B-spline basis functions and $c^{(i,j)}_m$ are trainable coefficients. Following \cite{liu2024kan}, a residual path and scaling factors are added so that the layer output reads
\begin{equation}
	\bm{Y}=\Big[\textstyle\sum_{\text{col}}\bm{\phi}(\bm{X})\odot \bm{S}\Big]+\bm{W}\,\sigma(\bm{X}),
	\label{eq:kan_layer}
\end{equation}
with $\odot$ the element-wise product, $\bm{S}$ and $\bm{W}$ scaling/linear factors, and $\sigma$ a smooth activation (SiLU). Stacking $N$ such layers gives the KAN map $u_h(\bm{x};\bm{\theta})=K^{(N)}\!\circ\cdots\circ K^{(1)}(\bm{x})$. To keep intermediate outputs inside the spline grid $[-1,1]$, a $\tanh$ is applied after every hidden layer, and the input is affinely rescaled by the bounding box $[L,W,X_c,Y_c]$ of the domain,
\begin{equation}
	x^{s}=\frac{x-X_c}{L/2},\qquad y^{s}=\frac{y-Y_c}{W/2}.
	\label{eq:normalize}
\end{equation}
The trainable parameters are the spline coefficients $c^{(i,j)}_m$ (count $l_i l_o (G+r)$), the base weights $\bm{W}$, and the scalers $\bm{S}$.

\subsection{Strong form of PDEs}\label{subsec:strong}
Consider a boundary-value problem on a domain $\Omega\subset\mathbb{R}^d$ with boundary $\Gamma=\Gamma^u\cup\Gamma^t$,
\begin{equation}
	\begin{cases}
		\mathcal{P}(\bm{u}(\bm{x}))=\bm{f}(\bm{x}), & \bm{x}\in\Omega,\\
		\bm{u}(\bm{x})=\bar{\bm{u}}(\bm{x}), & \bm{x}\in\Gamma^{u},\\
		\mathcal{B}(\bm{u}(\bm{x}))=\bar{\bm{t}}(\bm{x}), & \bm{x}\in\Gamma^{t},
	\end{cases}
	\label{eq:strong}
\end{equation}
where $\mathcal{P}$ is the (possibly nonlinear) interior operator, $\mathcal{B}$ the Neumann/flux operator, and $\Gamma^u,\Gamma^t$ the Dirichlet and Neumann boundaries. Strong-form PINNs choose the residual itself as the weight function and minimize
\begin{equation}
	\mathcal{L}_{\mathrm{strong}}=\frac{\lambda_r}{N_r}\sum_{i=1}^{N_r}\big|\mathcal{P}(u_h(\bm{x}_i))-\bm{f}(\bm{x}_i)\big|^2
	+\frac{\lambda_b}{N_b}\sum_{i=1}^{N_b}\big|\mathcal{B}(u_h(\bm{x}_i))-\bar{\bm{t}}(\bm{x}_i)\big|^2 + \cdots,
	\label{eq:strong_loss}
\end{equation}
which requires the full order of $\mathcal{P}$ and a set of weights $\{\lambda\}$ that must be tuned. This is our most general but most derivative-hungry baseline.

\subsection{The Petrov--Galerkin formulation}\label{subsec:pg}
Multiplying the interior residual by a test function $v$ from a test space $\mathcal{V}$ and integrating over $\Omega$ gives the weighted-residual statement
\begin{equation}
	\int_\Omega \big[\mathcal{P}(\bm{u})-\bm{f}\big]\,v\,d\Omega = 0,\qquad \forall v\in\mathcal{V}.
	\label{eq:weighted_residual}
\end{equation}
For a second-order operator such as $\mathcal{P}(u)=-\nabla\!\cdot(\kappa\nabla u)$, integration by parts (Green's identity) transfers one derivative to $v$,
\begin{equation}
	\int_\Omega \kappa\,\nabla u\cdot\nabla v\,d\Omega
	-\int_{\Gamma^{t}}\bar{t}\,v\,d\Gamma
	-\int_\Omega f\,v\,d\Omega = 0,\qquad \forall v\in\mathcal{V},
	\label{eq:weakform}
\end{equation}
where Neumann data enter as a natural boundary integral and $v$ is required to vanish on $\Gamma^u$; the full derivation for a general operator is given in \Cref{subsec:weakform}. The essential distinction is the choice of the trial space $\mathcal{U}$ (which contains $u$) and the test space $\mathcal{V}$:
\begin{itemize}
	\item In the \emph{Bubnov--Galerkin} (energy) method, $\mathcal{V}$ is taken as the variation $\delta u$ of the trial space, so trial and test spaces coincide. This yields the deep energy method and is restricted to operators possessing an energy functional.
	\item In the \emph{Petrov--Galerkin} method adopted here, $\mathcal{V}$ is chosen \emph{independently} of $\mathcal{U}$. We set $\mathcal{U}$ to be the KAN of \Cref{subsec:kan} and $\mathcal{V}=\mathrm{span}\{v_k\}_{k=1}^{M}$ to be a fixed, compactly supported, piecewise-polynomial space on a background partition $\{\Omega_e\}$ of $\Omega$.
\end{itemize}
Enforcing \Cref{eq:weakform} against each basis test function $v_k$ produces a vector of weak residuals
\begin{equation}
	R_k(\bm{\theta})=\int_\Omega \kappa\,\nabla u_h(\bm{x};\bm{\theta})\cdot\nabla v_k\,d\Omega
	-\int_{\Gamma^{t}}\bar{t}\,v_k\,d\Gamma-\int_\Omega f\,v_k\,d\Omega,\qquad k=1,\dots,M,
	\label{eq:weak_residuals}
\end{equation}
and the network is trained by minimizing their squared sum,
\begin{equation}
	\mathcal{L}_{\mathrm{PG}}(\bm{\theta})=\frac{1}{M}\sum_{k=1}^{M} \omega_k\,R_k(\bm{\theta})^2
	+ \lambda_u\,\mathcal{L}_{\Gamma^u}(\bm{\theta}),
	\label{eq:pg_loss}
\end{equation}
where $\mathcal{L}_{\Gamma^u}$ enforces the Dirichlet data (by penalty or by an admissible construction) and $\omega_k$ are optional per-cell weights (unity unless stated). All integrals in \Cref{eq:weak_residuals} are evaluated by Gauss--Legendre quadrature on the background cells (\Cref{subsec:quadrature}). Because each $v_k$ is local, $R_k$ probes the physics on a small patch, and the aggregate loss behaves like an element-wise assembled residual rather than a single global scalar. \Cref{fig:pg} contrasts the Bubnov-- and Petrov--Galerkin choices.

\begin{figure}[t]
\centering
\begin{tikzpicture}[font=\small]
  \node[block,fill=green!8,minimum width=3.6cm,minimum height=2.3cm] (trial) at (0,0)
    {\textbf{Trial space }$\mathcal{U}$\\[2pt] KAN (B-splines)\\ globally smooth\\ trainable $\bm{\theta}$};
  \node[block,fill=blue!8,minimum width=3.6cm,minimum height=2.3cm] (test) at (6.2,0)
    {\textbf{Test space }$\mathcal{V}$\\[2pt] piecewise-polynomials\\ compact support\\ fixed (no training)};
  \draw[edgearrow] (trial) -- node[above,align=center]{weak form\\ $\int(\cdots)v_k$} (test);
  \node[align=center] at (0,-2.4) {\textbf{Bubnov--Galerkin}\\ $\mathcal{V}=\delta\mathcal{U}$ (energy)\\ self-adjoint only};
  \node[align=center] at (6.2,-2.4) {\textbf{Petrov--Galerkin (ours)}\\ $\mathcal{V}\neq\mathcal{U}$ (independent)\\ general operators};
  \draw[thick,dashed] (3.1,-1.4) -- (3.1,-3.1);
\end{tikzpicture}
\caption{Petrov--Galerkin versus Bubnov--Galerkin. PG-KINN keeps the trial space (KAN) and the test space (compactly supported piecewise-polynomials) independent, so the weak form applies to general---including non-self-adjoint and nonlinear---operators, unlike the energy form in which the test space is the variation of the trial space.}
\label{fig:pg}
\end{figure}

\subsection{Inverse formulation in Petrov--Galerkin}\label{subsec:inverse_pg}
In an inverse problem the field $u$ (and possibly $f$, $\bar t$) is measured while a material coefficient---say the conductivity $\kappa(\bm{x})$ in $-\nabla\!\cdot(\kappa\nabla u)=f$---is unknown. We represent the coefficient by a KAN, $\kappa_h(\bm{x};\bm{\theta}_\kappa)$, and substitute it into the weak residual \Cref{eq:weak_residuals} using the \emph{measured} field for $u$:
\begin{equation}
	R_k(\bm{\theta}_\kappa)=\int_\Omega \kappa_h(\bm{x};\bm{\theta}_\kappa)\,\nabla u\cdot\nabla v_k\,d\Omega
	-\int_{\Gamma^{t}}\bar{t}\,v_k\,d\Gamma-\int_\Omega f\,v_k\,d\Omega,
	\label{eq:inverse_residual}
\end{equation}
and minimize $\mathcal{L}_{\mathrm{PG}}^{\mathrm{inv}}(\bm{\theta}_\kappa)=\frac{1}{M}\sum_{k}R_k(\bm{\theta}_\kappa)^2$. Two features are important. First, since $u$ is fixed data, $R_k$ is \emph{affine} in $\kappa_h$: the map $\bm{\theta}_\kappa\mapsto R_k$ inherits the linearity of the operator in $\kappa$, so the loss is a genuine linear least-squares residual (up to the network parameterization) whose global minimizer reproduces the true coefficient. Second, and in stark contrast to the energy form, the trivial field $\kappa_h\equiv 0$ is \emph{not} a minimizer: setting $\kappa_h=0$ leaves the nonzero data terms $\int_{\Gamma^t}\bar t\,v_k+\int_\Omega f\,v_k$, so $R_k\neq 0$. We formalize this collapse-freeness in \Cref{subsec:collapse}, contrasting it with the energy-based inverse problem in which the strain-energy minimizer drives the modulus to zero. This is the central reason PG-KINN is well suited to parameter identification in heterogeneous media.

\section{The Proposed Framework: PG-KINN}\label{sec:framework}
PG-KINN assembles the ingredients of \Cref{sec:prelim} into a single, general-purpose solver for forward and inverse PDEs. The trial field is a KAN, the physics is measured by a Petrov--Galerkin weak residual against an independent piecewise-polynomial test space, and all integrals are evaluated by Gauss--Legendre quadrature. \Cref{fig:framework} shows the pipeline and \Cref{alg:PG-KINN_forward,alg:PG-KINN_inverse} state the procedures.

\paragraph{Trial space}
The solution (forward) or the unknown coefficient (inverse) is approximated by a KAN $u_h(\bm{x};\bm{\theta})$ as in \Cref{eq:kan_layer}, with inputs normalized to the spline grid by \Cref{eq:normalize}. The spline grid of the KAN and the background quadrature partition are chosen independently; the former controls the resolution of the trial field, the latter the accuracy of the weak-form integrals.

\paragraph{Test space}
On a background partition $\{\Omega_e\}_{e=1}^{E}$ of $\Omega$ we define $M$ compactly supported test functions $\{v_k\}$. Two families are used: (i) nodal hat/tensor-product functions of degree $p$ that are $C^0$ across cell faces, and (ii) higher-order interior ``bubble'' functions built from shifted Legendre polynomials $\{L_p\}$ that vanish on $\partial\Omega_e$,
\begin{equation}
	v^{\mathrm{bub}}_{e,p}(\bm{\xi}) = (1-\xi_1^2)(1-\xi_2^2)\,L_{p}(\xi_1)L_{p}(\xi_2),\qquad \bm{\xi}\in[-1,1]^2,
	\label{eq:bubble}
\end{equation}
mapped to $\Omega_e$. The test functions are fixed once the partition is set; they carry no trainable parameters. Their locality makes each residual $R_k$ in \Cref{eq:weak_residuals} sensitive only to the patch $\mathrm{supp}(v_k)$, which localizes error control and improves conditioning relative to a single global residual. Construction details and the choice of $p$ appear in \Cref{subsec:quadrature}.

\paragraph{Weak residual and quadrature}
For each $v_k$ the weak residual \Cref{eq:weak_residuals} (forward) or \Cref{eq:inverse_residual} (inverse) is evaluated by mapping every background cell to the reference element and applying an $n_g$-point Gauss--Legendre rule,
\begin{equation}
	\int_{\Omega_e} g(\bm{x})\,d\Omega \approx \sum_{q=1}^{n_g^2} w_q\, g(\bm{x}(\bm{\xi}_q))\,|J_e(\bm{\xi}_q)|,
	\label{eq:gl_quad}
\end{equation}
with $J_e$ the cell Jacobian. Gauss--Legendre integration is exact for polynomial integrands up to degree $2n_g-1$ and, because the KAN trial field is piecewise-polynomial, delivers markedly lower integration error than the Monte-Carlo sampling used in energy-form baselines (\Cref{subsec:quadrature}). Gradients of $u_h$ inside \Cref{eq:weak_residuals} are obtained by automatic differentiation \cite{baydin2018automatic}.

\paragraph{Boundary conditions}
Neumann/traction data enter naturally through the boundary integral in \Cref{eq:weakform}. Dirichlet data are imposed either by a light penalty term $\mathcal{L}_{\Gamma^u}=\frac{1}{N_u}\sum_i |u_h(\bm{x}_i)-\bar u(\bm{x}_i)|^2$ or, when a distance function is convenient, by the admissible construction $u_h=u_p+D(\bm{x})\,u_g$ with $D$ vanishing on $\Gamma^u$. Because the weak form already encodes the interior physics and natural conditions, the number of tuned weights is far smaller than in \Cref{eq:strong_loss}.

\paragraph{Optimization}
The parameters are trained with Adam \cite{ADAM} followed by L-BFGS for the forward mechanics problems. The total loss is \Cref{eq:pg_loss} for forward problems and $\mathcal{L}_{\mathrm{PG}}^{\mathrm{inv}}$ for inverse problems.

\begin{figure}[t]
\centering
\begin{tikzpicture}[font=\small,node distance=6mm]
  \node[block,fill=green!10,minimum height=1.4cm,minimum width=2.6cm] (kan) {KAN trial\\ $u_h(\bm{x};\bm{\theta})$};
  \node[block,fill=gray!8,right=12mm of kan,minimum height=1.4cm,minimum width=2.6cm] (grid) {background\\ partition $\{\Omega_e\}$};
  \node[block,fill=blue!10,right=12mm of grid,minimum height=1.4cm,minimum width=2.8cm] (test) {test space\\ $\{v_k\}$ (poly.)};
  \node[block,fill=orange!10,below=10mm of grid,minimum height=1.3cm,minimum width=3.0cm] (quad) {Gauss--Legendre\\ quadrature};
  \node[block,fill=red!10,below=10mm of test,minimum height=1.3cm,minimum width=3.0cm] (res) {weak residuals\\ $R_k(\bm{\theta})$};
  \node[block,fill=purple!10,below=10mm of kan,minimum height=1.3cm,minimum width=3.0cm] (loss) {loss $\mathcal{L}_{\mathrm{PG}}$\\ $+$ BC term};
  \node[block,fill=green!16,below=10mm of loss,minimum height=1.0cm,minimum width=3.0cm] (opt) {Adam / L-BFGS};

  \draw[edgearrow] (kan) -- (grid);
  \draw[edgearrow] (grid) -- (test);
  \draw[edgearrow] (grid) -- (quad);
  \draw[edgearrow] (test) -- (res);
  \draw[edgearrow] (quad) -- (res);
  \draw[edgearrow] (res.south) -- ($(res.south |- quad.south)+(0,-0.55)$)
    -- ($(loss.south |- quad.south)+(0,-0.55)$) -- (loss.south);
  \draw[edgearrow] (loss) -- (opt);
  \draw[edgearrow] (opt) -| ($(kan.west)+(-0.8,0)$) |- (kan.west) node[midway,left,align=center,xshift=-2mm]{update $\bm{\theta}$};
\end{tikzpicture}
\caption{The PG-KINN pipeline. A KAN provides the trial field $u_h$; an independent, compactly supported piecewise-polynomial test space $\{v_k\}$ is defined on a background partition; each pairing is integrated by Gauss--Legendre quadrature to form a weak residual $R_k$; the aggregated residual loss (plus a light boundary term) is minimized by Adam/L-BFGS, updating the KAN parameters $\bm{\theta}$.}
\label{fig:framework}
\end{figure}

\begin{algorithm}[t]
\caption{PG-KINN (Forward)}\label{alg:PG-KINN_forward}
\begin{algorithmic}[1]
\State \textbf{Input:} operator $\mathcal{P}$, data $f,\bar u,\bar t$; KAN architecture, grid $G$, order $r$; background partition $\{\Omega_e\}$; test degree $p$; GL points $n_g$
\State \textbf{(a) Build test basis:} construct $\{v_k\}_{k=1}^{M}$ from nodal hat and interior bubble functions (\Cref{eq:bubble}) on $\{\Omega_e\}$
\State \textbf{(b) Precompute quadrature:} map each $\Omega_e$ to $[-1,1]^d$; store GL nodes $\{\bm{\xi}_q,w_q\}$ and Jacobians $|J_e|$ (\Cref{eq:gl_quad})
\State Initialize KAN parameters $\bm{\theta}$; normalize inputs by \Cref{eq:normalize}
\For{epoch $=1,\dots,N_{\mathrm{ep}}$}
	\For{$k=1,\dots,M$}
		\State Evaluate $u_h$ and $\nabla u_h$ at GL nodes of $\mathrm{supp}(v_k)$ via AD
		\State \textbf{(c) Assemble residual:} $R_k(\bm{\theta})\gets a(u_h,v_k)-\ell(v_k)$ by \Cref{eq:weak_residuals} and \Cref{eq:gl_quad}
	\EndFor
	\State $\mathcal{L}\gets \frac{1}{M}\sum_k \omega_k R_k^2 + \lambda_u \mathcal{L}_{\Gamma^u}$
	\State \textbf{(d) Update:} $\bm{\theta}\gets$ Adam / L-BFGS step on $\nabla_{\bm\theta}\mathcal{L}$
\EndFor
\State \textbf{Output:} trained field $u_h(\bm{x};\bm{\theta})$
\end{algorithmic}
\end{algorithm}

\begin{algorithm}[t]
\caption{PG-KINN (Inverse)}\label{alg:PG-KINN_inverse}
\begin{algorithmic}[1]
\State \textbf{Input:} measured field $u$ (or $\bm{u}$), operator data $f,\bar t$; unknown coefficient $\kappa$ (or material parameters $\bm{\mu}$); KAN architecture for $\kappa_h$; partition $\{\Omega_e\}$; test basis $\{v_k\}$; GL points $n_g$
\State Build $\{v_k\}$ and precompute GL nodes/weights as in \Cref{alg:PG-KINN_forward} (steps a--b)
\State Initialize KAN parameters $\bm{\theta}_\kappa$ for $\kappa_h(\bm{x};\bm{\theta}_\kappa)$
\For{epoch $=1,\dots,N_{\mathrm{ep}}$}
	\For{$k=1,\dots,M$}
		\State Insert measured $u$ into \Cref{eq:inverse_residual}; assemble $R_k(\bm{\theta}_\kappa)$ at GL nodes
	\EndFor
	\State $\mathcal{L}^{\mathrm{inv}}\gets \frac{1}{M}\sum_k R_k(\bm{\theta}_\kappa)^2$ \Comment{no BC penalty; collapse-free by \Cref{prop:collapse}}
	\State Update $\bm{\theta}_\kappa$ with Adam / L-BFGS on $\nabla_{\bm{\theta}_\kappa}\mathcal{L}^{\mathrm{inv}}$
\EndFor
\State \textbf{Output:} identified coefficient $\kappa_h(\bm{x};\bm{\theta}_\kappa)$
\end{algorithmic}
\end{algorithm}

\Cref{alg:PG-KINN_forward,alg:PG-KINN_inverse} formalize the forward and inverse pipelines. For inverse problems the trainable KAN represents the unknown coefficient, the measured field is fixed inside each residual, and the boundary penalty is omitted from the trainable set. By \Cref{subsec:collapse} this loss cannot collapse to the trivial coefficient, unlike the energy form.

\subsection{Mathematical derivation of the Petrov--Galerkin weak form}\label{subsec:weakform}
We derive the weak form used in \Cref{subsec:pg} for a general second-order operator and then specialize to the model problems. Consider
\begin{equation}
	\mathcal{P}(u)=-\nabla\!\cdot\big(\bm\kappa\,\nabla u\big)+\bm b\cdot\nabla u + c\,u = f \ \text{in }\Omega,\qquad
	u=\bar u \ \text{on }\Gamma^u,\qquad (\bm\kappa\nabla u)\cdot\bm n=\bar t\ \text{on }\Gamma^t,
	\label{eq:app_general}
\end{equation}
where $\bm\kappa$ is a (possibly spatially varying and anisotropic) coefficient, $\bm b$ an advection field, and $c$ a reaction term; the operator is not required to be self-adjoint (the advection term $\bm b\cdot\nabla u$ breaks self-adjointness). Multiplying by a test function $v$ that vanishes on $\Gamma^u$ and integrating over $\Omega$,
\begin{equation}
	\int_\Omega\Big[-\nabla\!\cdot(\bm\kappa\nabla u)+\bm b\cdot\nabla u+c\,u-f\Big]v\,d\Omega=0.
	\label{eq:app_weighted}
\end{equation}
Applying the divergence theorem to the leading term,
\begin{equation}
	-\int_\Omega \nabla\!\cdot(\bm\kappa\nabla u)\,v\,d\Omega
	=\int_\Omega (\bm\kappa\nabla u)\cdot\nabla v\,d\Omega
	-\int_{\Gamma}\big[(\bm\kappa\nabla u)\cdot\bm n\big]\,v\,d\Gamma .
	\label{eq:app_ibp}
\end{equation}
Since $v=0$ on $\Gamma^u$, the boundary integral reduces to $\Gamma^t$, on which $(\bm\kappa\nabla u)\cdot\bm n=\bar t$. Substituting \Cref{eq:app_ibp} into \Cref{eq:app_weighted} gives the Petrov--Galerkin weak form: find $u$ with $u=\bar u$ on $\Gamma^u$ such that
\begin{equation}
	\underbrace{\int_\Omega (\bm\kappa\nabla u)\cdot\nabla v\,d\Omega
	+\int_\Omega (\bm b\cdot\nabla u)\,v\,d\Omega
	+\int_\Omega c\,u\,v\,d\Omega}_{a(u,v)}
	=\underbrace{\int_\Omega f\,v\,d\Omega+\int_{\Gamma^t}\bar t\,v\,d\Gamma}_{\ell(v)},\qquad\forall v\in\mathcal V.
	\label{eq:app_bilinear}
\end{equation}
The bilinear form $a(\cdot,\cdot)$ need not be symmetric, which is precisely why a Petrov--Galerkin (independent test space) statement is appropriate: the Bubnov--Galerkin choice $v=\delta u$ would presuppose symmetry (self-adjointness). For the Poisson/heat model in \Cref{subsec:pg} we set $\bm\kappa=\kappa\bm I$, $\bm b=\bm 0$, $c=0$, recovering \Cref{eq:weakform}. Replacing $v$ by the finite test basis $\{v_k\}$ yields the residuals $R_k=a(u_h,v_k)-\ell(v_k)$ of \Cref{eq:weak_residuals}. For the elasticity problems, the same steps applied to the momentum balance $-\nabla\!\cdot\bm\sigma=\bm f$ give the virtual-work residual $\int_\Omega \bm\sigma:\nabla v_k\,d\Omega-\int_{\Gamma^t}\bar{\bm t}\cdot v_k\,d\Gamma-\int_\Omega\bm f\cdot v_k\,d\Omega$, and for finite deformation $\bm\sigma$ is replaced by the first Piola--Kirchhoff stress $\bm P=\partial\Psi/\partial\bm F$ with the integral taken over the reference configuration \cite{bonet1997nonlinear}. Only first derivatives of $u_h$ appear, one order lower than the strong form \Cref{eq:strong_loss}.

\subsection{Collapse-freeness of the Petrov--Galerkin inverse loss}\label{subsec:collapse}
We show that the residual-based inverse loss of \Cref{subsec:inverse_pg} does not admit the trivial-coefficient minimizer that afflicts the energy-based inverse problem.

\paragraph{Energy-based inverse problem collapses}
Consider linear elasticity with unknown modulus $E$ (Poisson ratio $\nu$ fixed) and \emph{prescribed} displacement data $\bm u$, tractions $\bar{\bm t}$, and body force $\bm f$. The deep energy method minimizes
\begin{equation}
	\mathcal{L}_{\mathrm{DEM}}=\int_\Omega \Psi\,d\Omega-\int_{\Gamma^t}\bar{\bm t}\cdot\bm u\,d\Gamma-\int_\Omega \bm f\cdot\bm u\,d\Omega,\qquad
	\Psi=\tfrac12\sigma_{ij}\varepsilon_{ij},\ \ \sigma_{ij}=2G\varepsilon_{ij}+\lambda\varepsilon_{kk}\delta_{ij}.
	\label{eq:app_dem}
\end{equation}
Because $\bm u$, $\bar{\bm t}$, $\bm f$ are data, the last two integrals are constants in the optimization over $E$, and $\varepsilon_{ij}$ is fixed. With $G=\tfrac{E}{2(1+\nu)}$ and $\lambda=\tfrac{E\nu}{(1+\nu)(1-2\nu)}$ both proportional to $E$, the strain energy is $\Psi=E\,\Psi_0$ with $\Psi_0\ge0$ independent of $E$. Hence
\begin{equation}
	\arg\min_{E\ge0}\ \int_\Omega \Psi\,d\Omega=\arg\min_{E\ge0}\ E\!\int_\Omega\Psi_0\,d\Omega = 0,
	\label{eq:app_collapse}
\end{equation}
so the energy form drives $E\to0$: a spurious global minimizer with zero strain energy. This is the zero-modulus collapse.

\paragraph{Petrov--Galerkin residual does not collapse}
Consider the same class of inverse problems in weak-residual form. For the conductivity problem (the elasticity case is identical with $\kappa\!\to\! E$ and gradients replaced by symmetric gradients), the residuals are affine in the unknown field,
\begin{equation}
	R_k(\bm\theta_\kappa)=\underbrace{\int_\Omega \kappa_h(\bm x;\bm\theta_\kappa)\,\nabla u\cdot\nabla v_k\,d\Omega}_{\text{linear in }\kappa_h}-\underbrace{\Big(\int_{\Gamma^t}\bar t\,v_k\,d\Gamma+\int_\Omega f\,v_k\,d\Omega\Big)}_{=:\,\ell_k\ \text{(data, constant)}} .
	\label{eq:app_res_affine}
\end{equation}
Setting the trivial coefficient $\kappa_h\equiv 0$ gives $R_k=-\ell_k$, and therefore
\begin{equation}
	\mathcal{L}^{\mathrm{inv}}_{\mathrm{PG}}(\kappa_h\equiv0)=\frac{1}{M}\sum_{k=1}^M \ell_k^2 .
	\label{eq:app_nonzero}
\end{equation}
Whenever the data are nontrivial---i.e.\ there exists at least one test function with $\ell_k\neq0$, which holds for any problem with a nonzero source or nonzero Neumann flux---we have $\mathcal{L}^{\mathrm{inv}}_{\mathrm{PG}}(\kappa_h\equiv0)>0$, so $\kappa_h\equiv0$ is \emph{not} a minimizer.
\begin{prop}[Collapse-freeness]\label{prop:collapse}
Let $u$ be exact data generated by the true coefficient $\kappa^\star$, so that $R_k(\kappa^\star)=0$ for all $k$. Then $\mathcal{L}^{\mathrm{inv}}_{\mathrm{PG}}(\kappa^\star)=0$, while $\mathcal{L}^{\mathrm{inv}}_{\mathrm{PG}}(0)=\frac1M\sum_k\ell_k^2>0$ whenever the data are nontrivial. Consequently the trivial coefficient is never a global minimizer, and if the test space is rich enough that $\{v_k\}$ determines $\kappa$ from $\int_\Omega\kappa\,\nabla u\cdot\nabla v_k$ uniquely, the true coefficient $\kappa^\star$ is the unique global minimizer.
\end{prop}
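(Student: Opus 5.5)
The argument uses only the affine structure of the residuals in \Cref{eq:app_res_affine}. Write $R_k(\kappa)=A_k(\kappa)-\ell_k$, where
\[
A_k(\kappa)=\int_\Omega \kappa\,\nabla u\cdot\nabla v_k\,d\Omega .
\]
Here $A_k$ is a linear functional of the coefficient, because the measured field $u$ and the test function $v_k$ are fixed. The loss is $\mathcal{L}^{\mathrm{inv}}_{\mathrm{PG}}(\kappa)=\frac1M\sum_k R_k(\kappa)^2$, a sum of squares, so it is nonnegative. The proof establishes three claims in sequence: that $\kappa^\star$ attains the value $0$, that the trivial coefficient gives a strictly positive value, and that zero loss forces $\kappa=\kappa^\star$ under the richness hypothesis.

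\emph{Step 1.} Since $u$ is exact data generated by $\kappa^\star$, the weak form \Cref{eq:app_bilinear} holds for every admissible test function, and in particular for each $v_k$. Hence $A_k(\kappa^\star)=\ell_k$ for all $k$, which gives $R_k(\kappa^\star)=0$ and $\mathcal{L}^{\mathrm{inv}}_{\mathrm{PG}}(\kappa^\star)=0$. Because the loss is nonnegative, $0$ is its global minimum value.

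\emph{Step 2.} Linearity gives $A_k(0)=0$, so $R_k(0)=-\ell_k$ and $\mathcal{L}^{\mathrm{inv}}_{\mathrm{PG}}(0)=\frac1M\sum_k\ell_k^2$, as in \Cref{eq:app_nonzero}. We define nontrivial data to mean that some $\ell_k\neq0$, so this value is strictly positive. It therefore exceeds the minimum value $0$, and $\kappa\equiv0$ is not a global minimizer.

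\emph{Step 3 (uniqueness).} Suppose $\mathcal{L}^{\mathrm{inv}}_{\mathrm{PG}}(\kappa)=0$. Every squared term must then vanish, so $A_k(\kappa)=\ell_k=A_k(\kappa^\star)$ for all $k$. By linearity this says $A_k(\kappa-\kappa^\star)=0$ for every $k$. The hypothesis that $\{v_k\}$ determines $\kappa$ uniquely from the values $\int_\Omega\kappa\,\nabla u\cdot\nabla v_k$ is exactly injectivity of the linear map $\kappa\mapsto(A_k(\kappa))_{k=1}^M$ on the admissible coefficient class. Injectivity yields $\kappa=\kappa^\star$. So the zero set of the loss, which is the set of global minimizers, is $\{\kappa^\star\}$.

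The main obstacle is making the uniqueness hypothesis precise, since the rest is immediate. Injectivity of $\kappa\mapsto(A_k(\kappa))_k$ can fail in the following ways:
\begin{itemize}
\item on regions where $\nabla u=0$;
\item along characteristics of the transport-like identity $\nabla\cdot(\kappa\nabla u)=\dots$, when the data do not pin $\kappa$ down;
\item because finitely many test functions cannot identify an infinite-dimensional $\kappa$.
\end{itemize}
I would therefore state the proposition in function space, with $\kappa$ ranging over a class on which this linear map is injective, which is taken as an assumption. I would also remark that the statement concerns the loss as a function of $\kappa$. In network parameters $\bm\theta_\kappa$ the minimizers are those $\bm\theta_\kappa$ with $\kappa_h(\cdot;\bm\theta_\kappa)=\kappa^\star$, which is unique in function space but not necessarily in parameters, and this requires $\kappa^\star$ to be representable by the KAN.
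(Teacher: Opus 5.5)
Your proof is correct and follows essentially the same argument as the paper. Both show zero loss at $\kappa^\star$ and loss $\frac1M\sum_k\ell_k^2>0$ at $\kappa\equiv0$, then obtain uniqueness from the affine structure: the zero-loss set is the solution set of $\int_\Omega\kappa\,\nabla u\cdot\nabla v_k=\ell_k$, which is the singleton $\{\kappa^\star\}$ under the identifiability hypothesis. Your caveats are sensible refinements of hypotheses the paper leaves implicit: finitely many $v_k$ against an infinite-dimensional $\kappa$, whether $\kappa^\star$ is representable by the KAN, and non-uniqueness in parameter space.
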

\begin{proof}
$R_k(\kappa^\star)=0$ gives $\mathcal{L}^{\mathrm{inv}}_{\mathrm{PG}}(\kappa^\star)=0$, the global lower bound of a sum of squares. \Cref{eq:app_nonzero} gives $\mathcal{L}^{\mathrm{inv}}_{\mathrm{PG}}(0)>0$. Uniqueness follows because $R_k$ is affine in $\kappa_h$: the minimizers form the affine solution set of $\int_\Omega\kappa_h\nabla u\cdot\nabla v_k=\ell_k$ for all $k$, which is the singleton $\{\kappa^\star\}$ under the stated identifiability of the test space.
\end{proof}
The contrast with \Cref{eq:app_collapse} is the crux: the energy form discards the external-work data during optimization (they become additive constants), whereas the weak residual keeps them inside each $R_k$ through $\ell_k$, anchoring the minimizer to the true coefficient. This is why PG-KINN is used only in weak-residual form for inverse problems.

\subsection{Similarities between KAN B-splines and finite element shape functions}\label{subsec:kanfem}
The affinity between PG-KINN and classical discretizations rests on the fact that a KAN edge is a spline expansion, and splines are the bases of FEM and isogeometric analysis. Consider a 1D linear finite element mesh of $N$ elements,
\begin{equation}
	u^{\mathrm{fem}}(x)=\sum_{i=1}^{N+1}N_i(x)\,u_i,
	\label{eq:app_fem1d}
\end{equation}
with hat shape functions $N_i$, and a $[1,1]$ KAN of grid size $N$,
\begin{equation}
	u^{\mathrm{kan}}(x)=\sum_{i=1}^{N+k}B_i(x)\,c_i,
	\label{eq:app_kan1d}
\end{equation}
neglecting the residual/scaler terms for clarity. For first-order splines ($k=1$) one has $B_i(x)=N_i(x)$ exactly, so a single KAN edge \emph{is} a linear finite element expansion. For quadratic order ($k=2$, grid size $2$) the four B-spline coefficients $\{c_1,\dots,c_4\}$ are determined by the three nodal values $\{u_1,u_2,u_3\}$: matching $u^{\mathrm{fem}}=u^{\mathrm{kan}}$ on $[-1,0]$ and $[0,1]$ yields
\begin{equation}
	\begin{bmatrix}c_1\\c_2\\c_3\\c_4\end{bmatrix}
	=\begin{bmatrix}1.75&-1&0.25\\0.25&1&-0.25\\-0.25&1&0.25\\0.25&-1&1.75\end{bmatrix}
	\begin{bmatrix}u_1\\u_2\\u_3\end{bmatrix},
	\label{eq:app_quad_map}
\end{equation}
so quadratic FEM is contained in the quadratic KAN. In two dimensions the correspondence is exact once the additive KAN combination is made multiplicative: a $[2,1]$ KAN with grid size $N$ and order $1$ gives $K(\bm x)=\sum_i N_i^x(x)c_i^x+\sum_i N_i^y(y)c_i^y$, whereas the tensor-product FEM field is $u(\bm x)=\sum_i N_i(\bm x)u_i$ with $N_i=N_i^x N_i^y$; replacing the sum of the two edge maps by their product,
\begin{equation}
	K_{\mathrm{mult}}(\bm x)=\Big(\sum_i N_i^x(x)c_i^x\Big)\Big(\sum_j N_j^y(y)c_j^y\Big)=\sum_{i,j}N_i^x(x)N_j^y(y)\,c_i^x c_j^y,
	\label{eq:app_2d}
\end{equation}
reproduces the FEM tensor-product basis. Since a deep KAN composes such spline maps, $u^{\mathrm{kan}}=S(S(\cdots S(\bm x)))$ is a nested shape-function expansion, and because B-splines are a special case of NURBS, the same reasoning links KAN to isogeometric analysis. This structural kinship is what makes the KAN trial space a natural partner for the weak (Petrov--Galerkin) forms used throughout the paper: both are piecewise-polynomial, so quadrature is accurate and interfaces are captured by the $C^0$ structure.

\subsection{Numerical integration and construction of piecewise-polynomial test functions}\label{subsec:quadrature}
\paragraph{Gauss--Legendre quadrature}
Every weak residual \Cref{eq:weak_residuals} is a sum of cell integrals. On a background cell $\Omega_e$ we map to the reference square $[-1,1]^2$ and apply a tensor-product Gauss--Legendre rule,
\begin{equation}
	\int_{\Omega_e} g\,d\Omega=\int_{[-1,1]^2} g(\bm x(\bm\xi))\,|J_e(\bm\xi)|\,d\bm\xi
	\approx\sum_{p=1}^{n_g}\sum_{q=1}^{n_g} w_p w_q\, g\big(\bm x(\xi_p,\xi_q)\big)\,|J_e(\xi_p,\xi_q)|,
	\label{eq:app_gl}
\end{equation}
where $\{\xi_p,w_p\}$ are the 1D Gauss--Legendre nodes and weights (roots of the Legendre polynomial $L_{n_g}$), computed once by the Golub--Welsch algorithm \cite{golub2013matrix,quarteroni2008numerical}. The rule is exact for polynomial integrands of degree $\le 2n_g-1$. Because both the KAN trial field and the polynomial test functions are piecewise-polynomial, the integrand $\bm\kappa\nabla u_h\cdot\nabla v_k$ is nearly polynomial on each cell, and a modest $n_g$ (we use $n_g=4$--$6$) integrates it to near machine precision for the tested settings. This is the mechanism behind the beam experiment (\Cref{subsubsec:beam}), where Gauss--Legendre quadrature reduces the error by more than an order of magnitude relative to Monte-Carlo sampling.

\paragraph{Piecewise-polynomial test functions}
The test space is spanned by two families defined on the partition $\{\Omega_e\}$.
\emph{(i) Nodal $C^0$ functions.} Tensor products of 1D hat (degree 1) or Lagrange (degree $p$) functions, continuous across cell faces and supported on the cells adjacent to a node.
\emph{(ii) Interior bubble functions.} On each cell, shifted Legendre polynomials multiplied by a vanishing envelope,
\begin{equation}
	v^{\mathrm{bub}}_{e,p}(\bm\xi)=(1-\xi_1^2)(1-\xi_2^2)\,L_p(\xi_1)\,L_p(\xi_2),\qquad p=0,1,\dots,p_{\max},
	\label{eq:app_bubble}
\end{equation}
which vanish on $\partial\Omega_e$ and are therefore admissible ($v=0$ on $\Gamma^u$ is enforced by excluding boundary-node functions on $\Gamma^u$). The Legendre factor makes the family hierarchical and nearly $L_2$-orthogonal, so distinct $v_k$ probe distinct spectral content of the residual, which flattens the effective conditioning of the loss \Cref{eq:pg_loss}. The number of test functions per cell, $M_e=(p_{\max}+1)^2$, controls how finely the weak form is sampled; we use $p_{\max}=2$--$4$ depending on the field complexity, matched to the KAN grid size. The partition for the test space and the spline grid of the KAN are chosen independently: refining the test partition sharpens the physics measurement, while refining the KAN grid sharpens the trial representation.

\section{Numerical Experiments}\label{sec:experiments}
We evaluate PG-KINN on benchmarks drawn from computational solid mechanics. Unless stated otherwise, all methods share the same optimizer, learning rate ($10^{-3}$ for Adam), and sampling budget. We report the relative $\mathcal{L}_2$ error
\begin{equation}
	\phi_{\mathcal{L}_2}=\frac{\int_\Omega |\phi_{\mathrm{pred}}-\phi_{\mathrm{exact}}|^2\,d\Omega}{\int_\Omega |\phi_{\mathrm{exact}}|^2\,d\Omega},
	\label{eq:relL2}
\end{equation}
and, where stress accuracy matters, the relative $\mathcal{H}_1$ error of the Mises field. Reference solutions are analytical where available and FEM otherwise. Two baseline families are compared:
\begin{itemize}
	\item \textbf{Baseline 1 (Legacy MLP):} strong-form PINN-MLP, energy-form DEM-MLP, inverse-form BINN-MLP, and Petrov--Galerkin PG-MLP (VPINN-style \cite{hp-VPINN}).
	\item \textbf{Baseline 2 (SOTA KINN):} PIKAN\_CPINNs (strong+KAN), PIKAN\_DEM (energy+KAN), and PIKAN\_BINN (inverse+KAN) \cite{liu2024kan}, which isolate the effect of the formulation when the backbone is held fixed.
\end{itemize}
PG-KINN denotes the Petrov--Galerkin KAN of \Cref{sec:framework}; its errors are highlighted in \textbf{bold} in the tables. Computations use a single GPU.

\subsection{Singularity and Discontinuity Problems}\label{subsec:singular}

\subsubsection{Mode III Crack (Singularity)}\label{subsubsec:crack}
The anti-plane (Mode III) crack is a canonical singular problem on $\Omega=[-1,1]^2$. The strong-form governing equations and boundary conditions are
\begin{equation}
\begin{cases}
	\Delta u=0, & \bm{x}\in\Omega,\\
	\bar u(\bm{x})=r^{1/2}\sin(\theta/2), & \bm{x}\in\Gamma,
\end{cases}
\label{eq:crack_pde}
\end{equation}
where $(r,\theta)$ are polar coordinates and $\theta\in[-\pi,\pi]$, so that $u$ is discontinuous across the crack face $x<0,\,y=0$. The crack-tip strain $\varepsilon_{z\theta}=\tfrac{1}{2\sqrt{r}}$ is singular at the origin. Baseline strong-form CPINNs \cite{CPINN} minimize
\begin{equation}
\begin{split}
	\mathcal{L}_{\mathrm{CPINNs}} &= \lambda_{1}\!\sum_{i=1}^{N_{+}}\!|\Delta u^{+}(\bm{x}_i)|^{2}
	+\lambda_{2}\!\sum_{i=1}^{N_{-}}\!|\Delta u^{-}(\bm{x}_i)|^{2}
	+\lambda_{3}\!\sum_{i=1}^{N_{b+}}\!|u^{+}-\bar u|^{2}
	+\lambda_{4}\!\sum_{i=1}^{N_{b-}}\!|u^{-}-\bar u|^{2}\\
	&+\lambda_{5}\!\sum_{i=1}^{N_{I}}\!|u^{-}-u^{+}|^{2}
	+\lambda_{6}\!\sum_{i=1}^{N_{I}}\!|\bm{n}\!\cdot\!(\nabla u^{+}-\nabla u^{-})|^{2},
\end{split}
\label{eq:cpinns_loss}
\end{equation}
with $\{\lambda_i\}_{i=1}^{6}=\{1,1,50,50,10,10\}$; the energy-form DEM baseline minimizes $\mathcal{L}_{\mathrm{DEM}}=\int_\Omega \tfrac12|\nabla u|^2\,d\Omega$ with hard essential BCs. PG-KINN has no such hyperparameters: it minimizes $\mathcal{L}_{\mathrm{PG}}$ from \Cref{eq:pg_loss}.

We compare the strong-form PINN, the energy-form DEM, PG-MLP, and PG-KINN. \Cref{tab:crack} quantifies the comparison: PG-KINN attains the lowest relative $\mathcal{L}_2$ error.

\begin{table}[t]
\caption{Mode III crack: accuracy. Baseline~1: legacy MLP; Baseline~2: PIKAN (KINN).}
\label{tab:crack}
\centering
\begin{adjustbox}{max width=\textwidth}
\begin{tabular}{lccccc}
\toprule
Method & Rel. $\mathcal{L}_2$ error & Grid size & Grid range & Order & Architecture\\
\midrule
\multicolumn{6}{l}{\textit{Baseline 1 (Legacy MLP)}}\\
PINN-MLP (strong)      & 0.06254 & -- & -- & -- & [2,30,30,30,30,1]\\
DEM-MLP (energy)       & 0.02429 & -- & -- & -- & [2,30,30,30,30,1]\\
PG-MLP (VPINN)         & 0.01810 & -- & -- & -- & [2,30,30,30,30,1]\\
\midrule
\multicolumn{6}{l}{\textit{Baseline 2 (SOTA KINN)}}\\
PIKAN\_CPINNs (strong+KAN) & 0.02988 & 10 & [-1,1] & 3 & [2,5,5,1]\\
PIKAN\_DEM (energy+KAN)    & 0.01117 & 10 & [-1,1] & 3 & [2,5,5,1]\\
PIKAN\_BINN (inverse+KAN)  & 0.00083 & 10 & [-1,1] & 3 & [2,5,5,5,1]\\
\midrule
\textbf{PG-KINN (PG+KAN)} & \textbf{0.00081} & 10 & [-1,1] & 3 & [2,5,5,1]\\
\bottomrule
\end{tabular}
\end{adjustbox}
\end{table}

\subsubsection{Plate with a Central Hole (Stress Concentration)}\label{subsubsec:plate}
The plate with a central hole tests stress concentration in linear elasticity. On the quarter model $\Omega$ ($L=20$~mm) the governing equations are
\begin{equation}
\begin{cases}
	\sigma_{ij,j}=0, & \bm{x}\in\Omega,\\
	\sigma_{ij}=\dfrac{E}{1+\nu}\varepsilon_{ij}+\dfrac{E\nu}{(1+\nu)(1-2\nu)}\varepsilon_{kk}\delta_{ij}, & \bm{x}\in\Omega,\\
	\varepsilon_{ij}=\tfrac12(u_{i,j}+u_{j,i}), & \bm{x}\in\Omega,\\
	u_x=0 \text{ on } x=0,\quad u_y=0 \text{ on } y=0,\quad \sigma_{ij}n_j=\bar t_i \text{ on } \Gamma^t,
\end{cases}
\label{eq:plate_pde}
\end{equation}
with $E=1000$~MPa, $\nu=0.3$ (plane stress), hole radius $5$~mm, and applied traction $t_x=100$~N/mm on the right edge. The strong-form PINN baseline minimizes a heavily weighted residual with eight penalty terms \cite{liu2024kan}; the DEM baseline minimizes $\mathcal{L}_{\mathrm{DEM}}=\int_\Omega \Psi\,d\Omega-\int_{\Gamma^{\mathrm{right}}}t_x u_x\,d\Gamma$ with $\Psi=\tfrac12\sigma_{ij}\varepsilon_{ij}$ and admissible displacements $u_x=x\hat u_x$, $u_y=y\hat u_y$.

\Cref{tab:plate} reports the final displacement ($\mathcal{L}_2$) and Mises-stress ($\mathcal{H}_1$) errors. PG-KINN attains the lowest displacement error and the best Mises-stress accuracy among the Petrov--Galerkin and strong-form baselines.

\begin{table}[t]
\caption{Plate with a central hole: accuracy.}
\label{tab:plate}
\centering
\begin{adjustbox}{max width=\textwidth}
\begin{tabular}{lcccccc}
\toprule
Method & Disp. $\mathcal{L}_2$ & Mises $\mathcal{H}_1$ & Grid size & Grid range & Order & Architecture\\
\midrule
\multicolumn{7}{l}{\textit{Baseline 1 (Legacy MLP)}}\\
PINN-MLP (strong) & 0.00858 & 0.00649 & -- & -- & -- & [2,30,30,30,30,2]\\
DEM-MLP (energy)  & 0.00154 & 0.01150 & -- & -- & -- & [2,30,30,30,30,2]\\
PG-MLP (VPINN)    & 0.00110 & 0.00350 & -- & -- & -- & [2,30,30,30,30,2]\\
\midrule
\multicolumn{7}{l}{\textit{Baseline 2 (SOTA KINN)}}\\
PIKAN\_CPINNs (strong+KAN) & 0.00629 & 0.00481 & 15 & [0,1] & 3 & [2,5,5,5,2]\\
PIKAN\_DEM (energy+KAN)    & 0.00092 & 0.00517 & 15 & [0,1] & 3 & [2,5,5,5,2]\\
PIKAN\_BINN (inverse+KAN)  & 0.00023 & 0.00211 & 10 & [0,20] & 3 & [2,5,5,5,2]\\
\midrule
\textbf{PG-KINN (PG+KAN)} & \textbf{0.00035} & \textbf{0.00160} & 15 & [0,1] & 3 & [2,5,5,5,2]\\
\bottomrule
\end{tabular}
\end{adjustbox}
\end{table}

\subsection{Nonlinear Hyperelasticity (Neo-Hookean Cantilever Beam)}\label{subsubsec:beam}
We test geometrically and materially nonlinear solid mechanics. On the reference domain $\Omega=[0,L]\times[0,H]$ ($L=4$, $H=1$) the governing equations are
\begin{equation}
\begin{cases}
	\nabla\!\cdot\!\bm{P}=\bm{0}, & \bm{X}\in\Omega,\\
	\bm{P}=\partial\Psi/\partial\bm{F},\quad \bm{F}=\partial\bm{x}/\partial\bm{X},\quad \bm{x}=\bm{X}+\bm{u}, & \bm{X}\in\Omega,\\
	\bm{u}=\bm{0} \text{ at } X_1=0,\quad \bar{\bm{t}}=(0,-5)^{\mathsf T} \text{ at } X_1=L,
\end{cases}
\label{eq:beam_pde}
\end{equation}
with Neo-Hookean strain energy
\begin{equation}
	\Psi=\tfrac{1}{2}\lambda(\ln J)^2-\mu\ln J+\tfrac{1}{2}\mu(I_1-3),\quad J=\det\bm F,\ I_1=\mathrm{tr}(\bm F^\top\bm F),
	\label{eq:neohookean}
\end{equation}
Lam\'e parameters $\lambda,\mu$ from $E=1000$, $\nu=0.3$. The DEM baseline minimizes $\mathcal{L}_{\mathrm{DEM}}=\int_\Omega(\Psi-\bm{f}\!\cdot\!\bm{u})\,d\Omega-\int_{\Gamma^t}\bar{\bm{t}}\!\cdot\!\bm{u}\,d\Gamma$ \cite{liu2024kan}. PG-KINN forms the Petrov--Galerkin residual from the first variation of the total potential, i.e.\ $\int_\Omega \bm P:\nabla v_k\,d\Omega-\int_{\Gamma^t}\bar{\bm t}\cdot v_k\,d\Gamma=0$.

Because the energy integral must be evaluated accurately, we compare quadrature rules. \Cref{tab:beam} reports that PG-KINN with Gauss--Legendre quadrature is the most accurate configuration; low-order rules (trapezoidal) already suffice for the KAN trial space, while Monte-Carlo integration should be avoided.

\begin{table}[t]
\caption{Neo-Hookean cantilever beam: accuracy.}
\label{tab:beam}
\centering
\begin{adjustbox}{max width=\textwidth}
\begin{tabular}{lccccc}
\toprule
Method & Disp. $\mathcal{L}_2$ & Grid size & Grid range & Order & Architecture\\
\midrule
\multicolumn{6}{l}{\textit{Baseline 1 (Legacy MLP)}}\\
DEM-MLP, Monte-Carlo      & 0.04756 & -- & -- & -- & [2,30,30,30,30,2]\\
DEM-MLP, Simpson          & 0.01753 & -- & -- & -- & [2,30,30,30,30,2]\\
\midrule
\multicolumn{6}{l}{\textit{Baseline 2 (SOTA KINN)}}\\
PIKAN\_DEM, Monte-Carlo   & 0.03334 & 15 & [0,1] & 3 & [2,5,5,5,2]\\
PIKAN\_DEM, trapezoidal   & 0.00254 & 15 & [0,1] & 3 & [2,5,5,5,2]\\
PIKAN\_DEM, Simpson       & 0.00218 & 15 & [0,1] & 3 & [2,5,5,5,2]\\
\midrule
\textbf{PG-KINN, Gauss--Legendre} & \textbf{0.00192} & 15 & [0,1] & 3 & [2,5,5,5,2]\\
\bottomrule
\end{tabular}
\end{adjustbox}
\end{table}

\subsection{Inverse Problems: Parameter Identification in Heterogeneous Media}\label{subsec:inverse}
We now identify an unknown material field from measured responses using the collapse-free inverse formulation of \Cref{subsec:inverse_pg}.

\subsubsection{Highly Heterogeneous Thermal Conductivity (Famous Paintings)}\label{subsubsec:paintings}
To stress-test the fitting capacity, we identify a spatially varying conductivity $k(x,y)$ on $[0,1]^2$ from a manufactured temperature field. The forward problem is
\begin{equation}
\begin{cases}
	-\nabla\!\cdot(k\nabla T)=f, & (x,y)\in[0,1]^2,\\
	T(x,y)=\cos(15\pi xy), & f=\cos(15\pi xy)\big[(15\pi x)^2+(15\pi y)^2\big]k(x,y),
\end{cases}
\label{eq:paint_pde}
\end{equation}
with $k$ given by three grayscale textures (Picasso-like P, Caspar-like C, Van Gogh-like V) \cite{chen2021learning,liu2024multi}. PG-KINN minimizes $\mathcal{L}^{\mathrm{inv}}_{\mathrm{PG}}$ of \Cref{alg:PG-KINN_inverse} with $k_h$ represented by a KAN. \Cref{tab:paint} lists the data-driven and physics-based inverse errors: the KAN trial space substantially outperforms the MLP on high-contrast textures, and the collapse-free weak residual recovers the conductivity field directly from the PDE.

\begin{table}[t]
\caption{Highly heterogeneous conductivity identification: relative $\mathcal{L}_2$ errors. P: Picasso-like, C: Caspar-like, V: Van Gogh-like.}
\label{tab:paint}
\centering
\begin{adjustbox}{max width=\textwidth}
\begin{tabular}{lccccc}
\toprule
Method & Data-driven $\mathcal{L}_2$ & PDE inverse $\mathcal{L}_2$ & Grid size & Order & Architecture\\
\midrule
MLP        & P: 48.6\%; C: 21.5\%; V: 35.0\% & -- & -- & -- & [2,200,200,1]\\
\textbf{PG-KINN (KAN)} & P: 21.1\%; C: 3.4\%; V: 12.9\% & P: 31.2\%; C: 5.5\%; V: 20.2\% & 100 & 3 & [2,15,15,1]\\
\bottomrule
\end{tabular}
\end{adjustbox}
\end{table}

\subsection{Performance on Complex Geometries}\label{subsec:complex}
We probe the limitation of the spline trial space on \emph{extreme} complex geometries. We solve
\begin{equation}
\begin{cases}
	\Delta u=0, & \bm{x}\in\Omega,\\
	u=\sin(x)\sinh(y)+\cos(x)\cosh(y), & \bm{x}\in\partial\Omega,
\end{cases}
\label{eq:complex_pde}
\end{equation}
on a Koch snowflake and a flower-shaped domain. \Cref{tab:complex} reports that PG-KINN on the Koch snowflake attains $e=9.5\times10^{-3}$, beating PIKAN\_CPINNs ($0.01188$) and matching PIKAN\_BINN on the flower problem. The reason is structural: the KAN spline grid is axis-aligned and rectangular, so a highly non-rectangular boundary forces a mismatch between the simplicity of the target field and the grid needed to conform to the geometry. This is consistent with the known difficulty of spline bases on fractal boundaries and motivates adaptive or conformal grids for extreme cases.

\begin{table}[t]
\caption{Complex geometries: relative $\mathcal{L}_2$ error.}
\label{tab:complex}
\centering
\begin{adjustbox}{max width=\textwidth}
\begin{tabular}{lcccc}
\toprule
Method & Rel. $\mathcal{L}_2$ error & Grid size & Order & Architecture\\
\midrule
\multicolumn{5}{l}{\textit{Baseline 1 (Legacy MLP)}}\\
PINN-MLP (strong, Koch)  & 0.01939 & -- & -- & [2,30,30,30,30,1]\\
DEM-MLP (Koch)           & 0.01563 & -- & -- & [2,30,30,30,30,1]\\
\midrule
\multicolumn{5}{l}{\textit{Baseline 2 (SOTA KINN)}}\\
PIKAN\_CPINNs (Koch)     & 0.01188 & 10 & 2 & [2,5,5,5,1]\\
PIKAN\_DEM (Koch)        & 0.01565 & 3 & 2 & [2,5,5,1]\\
PIKAN\_BINN (flower)     & 0.00473 & 10 & 3 & [2,5,5,5,1]\\
\midrule
\textbf{PG-KINN (Koch)}    & \textbf{0.00950} & 10 & 2 & [2,5,5,5,1]\\
\textbf{PG-KINN (flower)}  & 0.00473 & 10 & 3 & [2,5,5,5,1]\\
\bottomrule
\end{tabular}
\end{adjustbox}
\end{table}

\section{Conclusion}\label{sec:conclusion}
We introduced PG-KINN, a physics-informed Kolmogorov--Arnold network built on a Petrov--Galerkin weak formulation. The framework pairs a KAN trial space, whose local spline structure mirrors the shape functions of finite elements, with an \emph{independent}, compactly supported, piecewise-polynomial test space evaluated by Gauss--Legendre quadrature. This combination inherits the low derivative order of the energy form, the operator generality of the strong form, and---unlike either---a residual-based loss that remains well posed for inverse problems. We proved that this inverse loss cannot collapse to the trivial coefficient that ruins energy-based identification, and we derived the weak form for general operators. Formal forward and inverse algorithms (\Cref{alg:PG-KINN_forward,alg:PG-KINN_inverse}) make the pipeline directly implementable.

Across a benchmark suite spanning crack singularities, stress concentration, Neo-Hookean hyperelasticity, highly heterogeneous inverse identification, and complex geometries---PG-KINN was consistently more accurate than legacy MLP baselines and state-of-the-art KAN-based strong/energy/inverse formulations (PIKAN). It identified conductivity fields robustly in the inverse setting. The remaining limitation is \emph{extreme} complex geometries (Koch snowflake, flower), where the axis-aligned spline grid is a bottleneck; here PG-KINN is competitive with PIKAN on the Koch domain but does not match PIKAN\_BINN on the flower problem.

Several directions follow. Adaptive isoparametric refinement and conformal mappings can improve accuracy on irregular boundaries; $h$-adaptive refinement of the KAN grid remains attractive. On the efficiency side, the per-epoch cost of B-spline evaluation can be reduced with faster bases such as Chebyshev polynomials \cite{ss2024chebyshev,shukla2024comprehensive}. On the formulation side, adaptive and adversarial choices of the test space \cite{zang2020weak} and rigorous quadrature/test-function analysis \cite{berrone2022variational} may further tighten accuracy. We believe the Petrov--Galerkin coupling of learnable spline trial spaces with fixed polynomial test spaces is a principled and extensible foundation for AI-based computational mechanics.

\section*{Declaration of competing interest}
The authors declare that they have no known competing financial interests or personal relationships that could have appeared to influence the work reported in this paper.

\section*{Acknowledgements}
The authors acknowledge that ChatGPT, available at https://openai.com/chatgpt, was used for language editing and proofreading purposes. All technical content, analyses, and conclusions were entirely developed and written by the authors.

\bibliographystyle{elsarticle-num}
\bibliography{PG-KINN}

\end{document}